\documentclass{article} % For LaTeX2e
\usepackage{conference,times}
\usepackage[pdftex]{graphicx}
\usepackage{multirow}
\usepackage{booktabs}
%%%%% NEW MATH DEFINITIONS %%%%%

\usepackage{amsmath,amsfonts,bm}

% Mark sections of captions for referring to divisions of figures

% Highlight a newly defined term

% Figure reference, lower-case.

% Figure reference, capital. For start of sentence

% Section reference, lower-case.

% Section reference, capital.

% Reference to two sections.

% Reference to three sections.

% Reference to an equation, lower-case.
\def\eqref#1{equation~\ref{#1}}
% Reference to an equation, upper case

% A raw reference to an equation---avoid using if possible

% Reference to a chapter, lower-case.

% Reference to an equation, upper case.

% Reference to a range of chapters

% Reference to an algorithm, lower-case.

% Reference to an algorithm, upper case.

% Reference to a part, lower case

% Reference to a part, upper case

\def\1{\bm{1}}

% Random variables

% rm is already a command, just don't name any random variables m

% Random vectors

% Elements of random vectors

% Random matrices

% Elements of random matrices

% Vectors

\def\vh{{\bm{h}}}

\def\vt{{\bm{t}}}

% Elements of vectors

% Matrix
\def\mA{{\bm{A}}}

\def\mK{{\bm{K}}}

\def\mQ{{\bm{Q}}}

\def\mV{{\bm{V}}}

% Tensor
\DeclareMathAlphabet{\mathsfit}{\encodingdefault}{\sfdefault}{m}{sl}
\SetMathAlphabet{\mathsfit}{bold}{\encodingdefault}{\sfdefault}{bx}{n}

% Graph

% Sets

% Don't use a set called E, because this would be the same as our symbol
% for expectation.

% Entries of a matrix

% entries of a tensor
% Same font as tensor, without \bm wrapper

% The true underlying data generating distribution

% The empirical distribution defined by the training set

% The model distribution

% Stochastic autoencoder distributions

 % Laplace distribution

% Wolfram Mathworld says $L^2$ is for function spaces and $\ell^2$ is for vectors
% But then they seem to use $L^2$ for vectors throughout the site, and so does
% wikipedia.

 % See usage in notation.tex. Chosen to match Daphne's book.

\usepackage{xcolor}
\iclrfinalcopy
\usepackage{amsmath}
\usepackage{amsthm}
\usepackage{amsfonts}
\usepackage{amssymb}
\usepackage{xcolor}
\newtheorem{theorem}{Theorem}

\newtheorem{lemma}[theorem]{Lemma}

\theoremstyle{definition}

\usepackage{soul}
\usepackage{hyperref}
\usepackage{caption}
\usepackage{url}
\usepackage{listings}
\usepackage{wrapfig}
\usepackage{amsmath,amsfonts,amssymb}
\usepackage{algorithm,algorithmic}

\definecolor{burgundy}{rgb}{0.5, 0.0, 0.13}

\title{Distilling an End-to-End Voice Assistant Without Instruction Training Data}

% Authors must not appear in the submitted version. They should be hidden
% as long as the \iclrfinalcopy macro remains commented out below.
% Non-anonymous submissions will be rejected without review.

\author{William Held\thanks {Contact: held@stanford.edu, diyiy@stanford.edu. All authors besides first and last sorted alphabetically.}$\texttt{ }^{\gamma, \sigma}$ \quad  Minzhi Li$^{\nu}$  \quad Michael Ryan$^{\sigma}$ \\ \textbf{Weiyan Shi$^{\epsilon}$  \quad Yanzhe Zhang$^{\gamma, \sigma}$  \quad Diyi Yang$^{\sigma}$}\\
$^{\gamma}$Georgia Institute of Technology, $^{\sigma}$Stanford University \\ $^{\nu}$National University of Singapore $^{\epsilon}$Northeastern University
}

% The \author macro works with any number of authors. There are two commands
% used to separate the names and addresses of multiple authors: \And and \AND.
%
% Using \And between authors leaves it to \LaTeX{} to determine where to break
% the lines. Using \AND forces a linebreak at that point. So, if \LaTeX{}
% puts 3 of 4 authors names on the first line, and the last on the second
% line, try using \AND instead of \And before the third author name.

%\iclrfinalcopy % Uncomment for camera-ready version, but NOT for submission.
\begin{document}

\maketitle

\begin{abstract}
 Voice assistants, such as Siri and Google Assistant, typically model audio and text separately, resulting in lost speech information and increased complexity. Recent efforts to address this with end-to-end Speech Large Language Models (LLMs) trained with supervised finetuning (SFT) 
 have led to models ``forgetting" capabilities from text-only LLMs. Our work proposes an alternative paradigm for training Speech LLMs without instruction data, using the response of a text-only LLM to transcripts as self-supervision. Importantly, this process can be performed without annotated responses. We show that our Distilled Voice Assistant (DiVA) generalizes to Spoken Question Answering, Classification, and Translation. Furthermore, we show that DiVA better meets user preferences, achieving a 72\% win rate compared with state-of-the-art models like Qwen 2 Audio, despite using $>$100x less training compute.
\end{abstract}

\begin{figure}[H]
    \centering
    \includegraphics[width=0.8\textwidth]{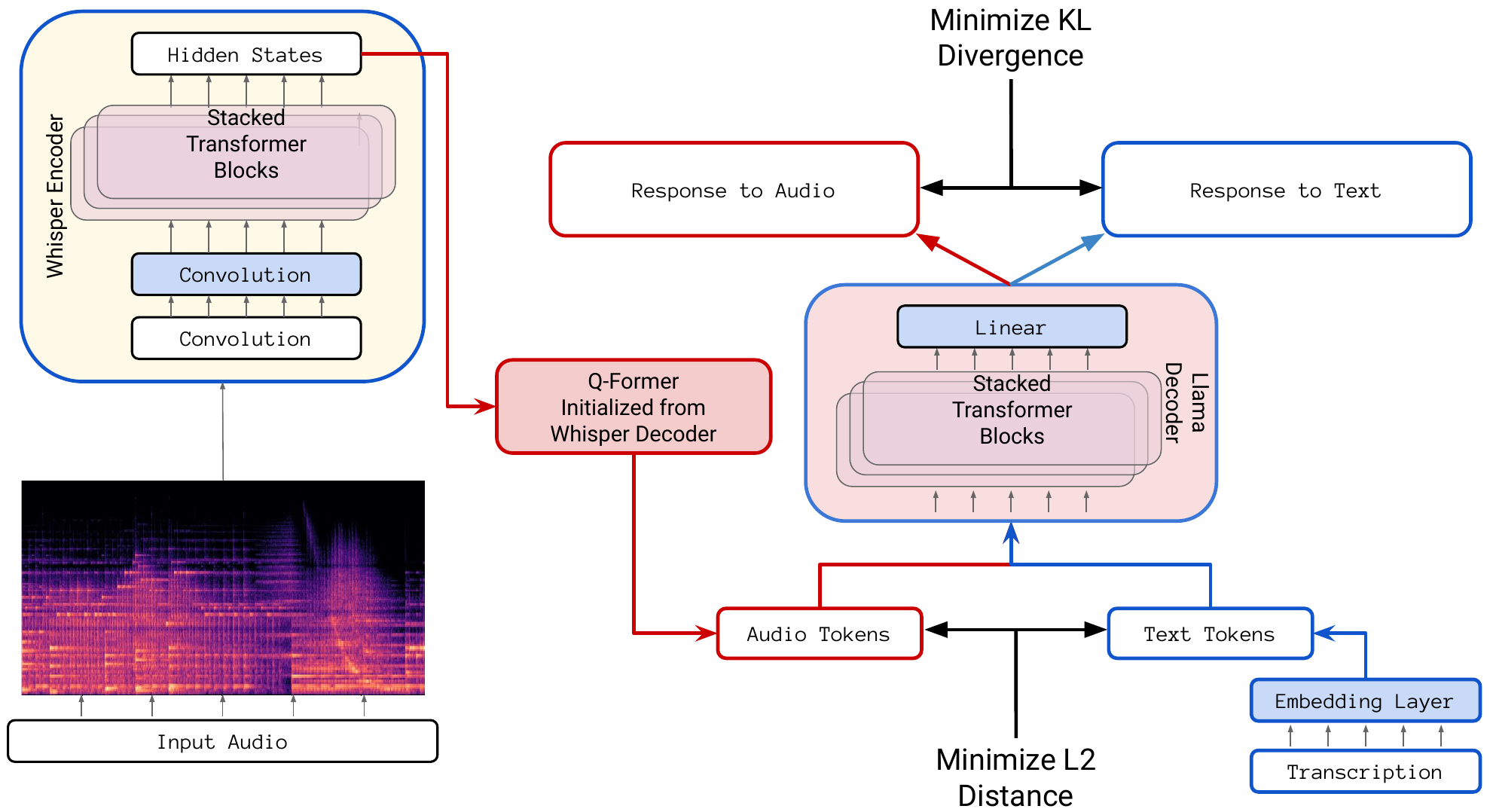} 
    \caption{Training Pipeline for Distilled Voice Assistant (DiVA), \textcolor{red}{Red} indicates trainable components while \textcolor{blue}{Blue} indicates frozen pretrained modules. DiVA modifies a text-only LLM into a general purpose Speech LLM by using the model's own responses to transcribed speech as self-supervision.}
    \label{fig:training}
\end{figure}
\vspace{-2em}

\section{Introduction}

As Large Language Models (LLMs) capabilities increase, so does the value of bringing these capabilities to new modalities, including audio and speech~\citep{shu2023llasm,wang2023slm,gong2023listen}.
Speech is a natural interaction surface for language technology~\citep{murad2019revolution}, offering measurable efficiency gains for users~\citep{ruan2018comparing}. One straightforward method of integrating speech with LLMs is to feed audio to an Automatic Speech Recognition (ASR) model and produce a text transcription for the LLM to use. However, this process loses meaningful information carried through tone, pacing, and accent~\citep{upadhyay2023studying} regardless of the transcription accuracy. Furthermore, finetuning these pipelined systems requires supervision for both transcription and response generation, increasing annotation complexity and costs.

As such, LLMs that interface with speech directly have the potential to accelerate inference, reduce annotation costs, and capture the rich information inevitably lost by ASR. In this pursuit, a variety of works have trained audio-encoders on top of LLMs~\citep{ao2021speecht5, chen2021speechnet, deshmukh2023pengi, chu2023qwen, wu2023decoder}, many of which utilize the same well-established approach: large-scale multi-task supervised finetuning (SFT). 

Models using SFT face several challenges. First, without a large degree of task diversity in their training data, they often fail to generalize capabilities from the text-only LLM to speech. As observed in ~\citet{tang2023salmonn}, freezing the weights of the text-only LLM is insufficient to prevent this ``forgetting". In order to generalize well, SFT must be trained with labeled data from a wide range of tasks and domains, with minimal imbalance between tasks. However, broad annotated speech instruction training data does not currently exist.

The limited instruction data that does exist is often collected from a small pool of speakers~\citep{kim2021robust, tomasello2023stop} or intended for evaluation rather than training~\citep{faisal2021sd, eisenstein2023md3}. This lack of representation of speech from the wider population is likely to exacerbate biases in speech processing~\citep{koenecke2020racial, mengesha2021don, chan2022training, javed2023svarah, brewer2023envisioning}. At present, Speech LLMs appear fundamentally limited by existing instruction data.

In this work, however, we argue that these ``limitations" of existing data are artificially imposed by SFT. The speech community has already invested in large-scale data collection from the internet~\citep{radford2023robust, chen2021gigaspeech, YODAS}, audiobooks~\citep{panayotov2015librispeech, MLS}, and public archives~\citep{PeoplesSpeech}. Furthermore, several datasets have been explicitly gathered to represent diverse demographics~\citep{porgali2023casual, garg2023improving}. However, these large-scale and diverse datasets are dominated by data in just one task: Automatic Speech Recognition (ASR). This means that models trained with SFT cannot make use of the entirety of this data without ``forgetting" non-ASR capabilities.

We solve the ``forgetting" problem by training a model that generalizes well despite using \textit{only} ASR data. Rather than relying on external labels, our \textbf{Di}stilled \textbf{V}oice \textbf{A}ssistant (\textbf{DiVA}) self-supervises learning using the output distribution of an LLM in response to transcripts as a target, a cross-modal form of context distillation~\citep{snell2022learning, mu2024learning}. We test our approach by training on just a single corpus, the CommonVoice, consisting of speech and transcriptions contributed by volunteers around the world and recorded on their own devices~\citep{ardila2019common}. 

Despite this data simplicity, DiVA generalizes to Spoken Question Answering, Classification, and Translation. Furthermore, DiVA is preferred by users to our most competitive baseline Qwen 2 Audio in 72\% of trials despite DiVA using over 100x less training compute. Beyond contributing a new Speech LLM, DiVA creates a new approach to Speech LLMs that trains more efficiently and generalizes better \textit{without} requiring investment in new speech instruction data.

\section{Related Work}
\begin{table*}[t]
\centering
\caption{High-Level comparison with state-of-the-art open-access Speech \& Audio LLMs which we compare to. DiVA offers an entirely different approach to training using context distillation.}
\begin{tabular}{ccccc}
\toprule
Model & Base LLM & Training Method & \# Tasks & \# Hours  \\ \cmidrule(lr){1-1} \cmidrule(lr){2-2} \cmidrule(lr){3-3} \cmidrule(lr){4-4} \cmidrule(lr){5-5}
SALMONN     & Alpaca &  SFT           &          12       &           4400         \\
Qwen Audio Chat      & Qwen Chat&  SFT           &       31       &        $\sim$50k              \\
Qwen 2 Audio Instruct      & Qwen2 Instruct&  SFT, DPO           &       Unreported       &        $>$370k              \\
\cmidrule(lr){1-5} 
DiVA (Ours)     & Llama 3 &  Distillation             &   N/A              &       3.5k        \\
\bottomrule
\end{tabular}
\label{table:comparison}
\end{table*}
LLMs have been extended to both audio and image inputs using cross-modal encoders. For example, LLaVA \citep{liu2023llava} enables image understanding by connecting CLIP \citep{clip} to Vicuna \citep{vicuna2023} through an MLP layer. Several recent works~\citep{zhang2023speechgpt, gong2023listen, tang2023salmonn, chu2023qwen, chu2024qwen2} have connected audio-encoders~\citep{gong2021ast,HUBERT} to LLMs. There are two critical questions in this space.

\textbf{How can audio features be transformed into a reasonable number of LLM input embeddings?} Audio comes at high sample rates, and therefore, audio encoders often have a large number of outputs. For these features to be usable by the LLM, the dimensionality must be reduced, either by stacking consecutive features~\citep{wu2023decoder, fathullah2024prompting} or learning an adapter-module, such as an MLP \citep{liu2023llava, gong2023listen}, or Q-Former \citep{dai2023instructblip, tang2023salmonn}. 

While learned approaches are more flexible, allowing for an adaptive reduction, they generally require learning a cross-attention mechanism, which generally requires significant training~\citep{blip2}. In this work, we find the best of both worlds by leveraging the Whisper decoder~\citep{radford2023robust} to initialize the text-audio cross-attention mechanism of a Q-Former. 
    
\textbf{How can speech LLMs be trained to achieve instruction following abilities using existing data?}
Prior work has explored two routes for creating instruction data without major financial investment. The first approach is to transform many existing datasets into an instruction-following format~\citep{dai2023instructblip, chu2023qwen, tang2023salmonn, dai2023instructblip, liu2023improved}. In this case, limitations are often set by datasets that are not aligned with intended LLM usage and imbalances across tasks. The second approach is to train on synthetic responses to text representations of the new modality from commercial models~\citep{liu2023llava, gong2023listen}. 

Our approach is most related to the latter. Rather than generating data with an external model, we capitalize on the idea that instruction-tuned language models can provide valuable learning signals as long as the input is within the textual modality. Using the output distribution in response to text transcripts, we can more strongly guarantee the transfer of existing capabilities using context distillation~\citep{snell2022learning, mu2024learning}.

\textbf{How can we train foundation models for speech using open and permissively licensed data?} Recently, frontier LLMs have begun integrating native speech capabilities. Unlike prior speech foundation models~\citep{Wav2Vec, HUBERT, WavLM, kim2021robust, OWSM, OWSM31}, these models offer virtual assistant capabilities rather than self-supervised audio representations or transcriptions. It is unclear to what degree these results are dependent on internal datasets, especially since even the state-of-the-art \textit{open-access} Speech LLM with such capabilities reports no training data details other than size~\citep{chu2024qwen2}.

Similar to the Open Whisper-style Speech Model (OWSM) initiative \citep{OWSM31}, we use only open and permissively-licensed data. Furthermore, unlike the baselines we compare to in Table \ref{table:comparison}, we release the training code, rather than just the inference code, which can help reproduce a DiVA-style model easily.
% using resources that can be freely obtained via the Google TPU research cloud. 
Beyond our novel method, we believe this too broadens the ability to train and understand Speech LLMs.

\section{Method}
\label{sec:method}
DiVA is an end-to-end voice and text assistant, trained using the process shown in Figure \ref{fig:training}. We focus heavily on effectively using pretrained models in each domain. Similar to prior works, we initialize the audio encoder from the 1.5B parameter Whisper-Large-v3 model. Unlike prior works, we use all components of Whisper: not only reusing the encoder but also initializing a Q-Former from the decoder. We train this architecture using distillation loss on the input and output distribution of the text-only LLM, which we discuss in Section \ref{sec:losses}.

\subsection{Model Initialization}
When adding multimodal capabilities to an existing language model, the new modality must be represented as embeddings that can used in place of text token embeddings. Achieving this goal has two steps. First, meaningful features must be extracted from the input modality. Second, these features must be aggregated to be in-distribution for the downstream language model.

\paragraph{Audio Feature Extraction} We follow prior works~\citep{chu2023qwen, tang2023salmonn} and use the Whisper encoder~\citep{radford2023robust}. Whisper first transforms the raw audio signal into a 128-channel time-frequency domain Mel-spectrogram. This is then passed through two 1D convolutions and used as embeddings fed to an unmodified Transformer architecture~\citep{attentionAll}.

\paragraph{Audio-Text Feature Alignment}

While the Whisper encoder extracts meaningful audio features, they are encoded at high granularity, with one token for every 40 milliseconds of input audio. By comparison, humans speak around one syllable every 150 milliseconds on average across languages~\citep{speech_rate}, and most tokens in an LLM vocabulary are made up of several syllables. This creates a mismatch between the granulatity between the Whisper encoder outputs and the downstream LLMs input distribution.

Prior work~\citep{tang2023salmonn} addresses this using a Querying Transformer (Q-Former, \citealt{blip2}), which learns static query embeddings with cross-attention to keys and values features from another modality. Given audio embeddings $\mA$, the Q-Former learns a transformer with a cross attention mechanism $\sigma(\frac{\mQ(\mK\mA^\intercal)}{\sqrt{d_k}})(\mV\mA)$ where $\mQ$ is a static set of query vectors, while $\mK$ and $\mV$ are projection matrices for the audio tokens. Conceptually, this cross-attention mechanism learns to dynamically aggregate information from the audio tokens into text-like tokens. This comes at the cost of significant training required to train the transformer from scratch.

The Whisper decoder, which prior work discards, is trained with a similar goal for ASR: mapping audio embeddings to discrete text tokens. Therefore, rather than learning Q-Former parameters from scratch, we initialize $K$ and $V$ from Whisper's cross-attention mechanism. We adapt the model to a Q-Former by replacing the inputs with static query tokens $Q$. Finally, we project the output from the hidden dimension $h$ of Whisper to the hidden dimension $H$ of the LLM. This results in a set of $\{\vt_q^{audio}\in \mathbb{R}^{H \times |Q|}\}$ output tokens representing the audio. 

\paragraph{Text Decoding}
For language processing and instruction following capabilities, we use Llama 3~\citep{llama3}\footnote{Training was performed before the release of Llama 3.1.} and leave its weight frozen throughout training.

\subsection{Distillation Losses}
\label{sec:losses}

We optimize two loss functions based on audio recordings and corresponding text transcripts from ASR data. First, we minimize the distance between embeddings of audio and text on the \textit{input} side of the LLM, similar to prior work for Vision-LMS~\citep{clip, blip2}. Then, we minimize the KL Divergence between the \textit{output} distribution in response to audio and text as a form of cross-modal context distillation~\citep{mu2024learning, snell2022learning}.

\subsubsection{Cross-Modal Token Alignment Loss}
To capture the mutual information between recordings and text transcripts, for a given ASR example (a text transcript and an audio recording), we align speech and text tokens as follows: The text transcript is embedded as $N$ text tokens $\vt_i^{text} \in \mathbb{R}^{H\times N}$. The model produces $|Q|$ tokens from the audio recording where $|Q| > N$. We align these representations by minimizing the $L_2$ distance between the text embeddings and the final $i$ audio embeddings:

\begin{equation}\label{con-loss}
    L_{con} = \sum_{n=0}^N|\vt_n^{text} - \vt_{Q - N + n}^{audio}|_2
\end{equation}

We use the final $N$ tokens of the audio embedding rather than the initial $N$ tokens due to the causal attention in Whisper's decoder. Since the final tokens can attend to all preceding tokens, aligning the representations of the final tokens backpropagates signal to every token in the sequence. On the other hand, the additional $Q-N$ tokens provide information bandwidth for other information, such as sociophonetic cues, to be passed to the LLM.

Empirically, as we explore in Section \ref{sec:ablation}, training with only token alignment leads to poor model quality, even when low loss is achieved. However, token alignment appears to enable reasoning between text and audio tokens, vastly improving text instruction adherence. 

\subsubsection{Distillation from Output Embedding Distance}

Voice assistant models should give coherent, helpful, and harmless responses to user speech. Thankfully, many openly accessible text-only LLMs have been extensively refined for these objectives. As such, our challenge is not to learn these behaviors but instead to transfer them to the audio modality. While, in theory, input token alignment could achieve this, even minor differences in input embeddings can significantly affect model behavior in practice~\citep{badprompt}.

Distillation loss, on the other hand, directly optimizes for the similarity of the output distribution~\citep{distill}. Rather than distilling a large model into a smaller model, recent work has applied to distilling useful context into model weights, a process termed context distillation~\citep{snell2022learning, mu2024learning}. Here, we apply context distillation across modalities, aiming to distill a text context into the audio modality under the assumption that the model should respond similarly to audio and text for most inputs.

In prior context distillation works, the full Kullback–Leibler (KL) Divergence has been shown to be prohibitively expensive at training time due to the large vocabulary of modern LLMs. Therefore, the KL Divergence is instead approximated by sampling random tokens~\citep{snell2022learning}. In our case, where the output embedding matrix is frozen, we show that there is an objective function easier to optimize:

\begin{lemma}
   Given the probability $P_t$ from a teacher model and the probability $P_s$ from a student model, the KL Divergence is defined as $\mbox{KL}(P_t,  P_s) = P_t \cdot (\log P_t - \log P_s)$.  For a transformer language model, $P_s = \sigma(O_sh_s)$ where $h_s$ is the final hidden state, $O_s$ is the output embedding matrix, and $\sigma$ is the softmax function. Let $\theta_s$ be the student weights which we are trying to train to minimize the KL Divergence, then
\[
    \arg_{\theta_s}\min ||\vh_s - \vh_t||_2 \subset \arg_{\theta_s}\min \mbox{KL}(P_t,  P_s)
\]
\end{lemma}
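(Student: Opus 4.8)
The plan is to exploit the single structural fact that makes this work: the output embedding matrix is \emph{frozen} and therefore shared between teacher and student, so that $O_s = O_t =: O$ and both output distributions are the image of their respective final hidden states under the \emph{same} deterministic map $\Phi(h) := \sigma(O h)$. Concretely, $P_t = \Phi(h_t) = \sigma(O h_t)$ and $P_s = \Phi(h_s) = \sigma(O h_s)$, where $h_t$ is fixed (the frozen text LLM applied to the transcript) and only $h_s$ depends on the trainable parameters $\theta_s$. The inclusion then follows from comparing the global minimizers of the two nonnegative objectives, both of which bottom out at $0$.

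First I would record two elementary facts. By Gibbs' inequality (equivalently, Jensen applied to $-\log$), $\mbox{KL}(P_t, P_s) \ge 0$ with equality if and only if $P_s = P_t$; hence $\argmin_{\theta_s} \mbox{KL}(P_t,P_s)$ is exactly the set of $\theta_s$ for which $P_s = P_t$, provided this set is nonempty. Likewise $\|h_s - h_t\|_2 \ge 0$ with equality if and only if $h_s = h_t$, so $\argmin_{\theta_s}\|h_s - h_t\|_2$ is the set of $\theta_s$ for which $h_s = h_t$, assuming the teacher hidden state is realizable by the student so that the minimum $0$ is attained.

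Then I would observe that the containment is immediate from the fact that $\Phi$ is a well-defined function: if $\theta_s$ makes $h_s = h_t$, then applying $\Phi$ to both sides gives $P_s = \sigma(O h_s) = \sigma(O h_t) = P_t$, so such a $\theta_s$ achieves $\mbox{KL}(P_t,P_s) = 0$ and hence lies in the KL-minimizer set. Therefore every minimizer of the $L_2$ objective is a minimizer of the KL objective, which is the claimed $\subset$. I would also remark why the inclusion is in general \emph{strict}: the map $\Phi$ is not injective, since softmax is invariant under adding a constant vector to its argument and $O$ need not have full column rank. Thus matching the output distribution ($P_s = P_t$) is strictly weaker than matching the hidden state ($h_s = h_t$), which is precisely the operational point of the lemma: the $L_2$ surrogate carves out a smaller, cleaner target set inside the KL-optimal set and can be optimized without ever forming the full vocabulary-sized softmax.

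The step I expect to require the most care is not the algebra but the \emph{attainability} clause hidden in both $\arg\min$'s: the subset statement is only meaningful if the $L_2$ objective can actually reach $0$, i.e. the student architecture is expressive enough to reproduce $h_t$ on the relevant inputs. If the global minimum of $\|h_s - h_t\|_2$ were bounded away from $0$, its minimizers need not send $\mbox{KL}$ to its minimum and the inclusion could fail. I would therefore state this realizability assumption explicitly (or phrase the result at the level of the global optima, as the implication $h_s = h_t \Rightarrow P_s = P_t$), and emphasize that the essential and only nontrivial ingredient is the shared frozen $O$: without $O_s = O_t$ the implication $h_s = h_t \Rightarrow P_s = P_t$ breaks and the lemma is false.
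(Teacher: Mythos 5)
Your proof is correct and follows essentially the same route as the paper's own: both hinge on the shared frozen output matrix ($O_s = O_t$, since the student is initialized from the teacher and $O_s$ is held constant), deduce $h_s = h_t \Rightarrow \sigma(O_s h_s) = \sigma(O_t h_t)$, i.e.\ $P_s = P_t$, so that every $L_2$ minimizer is a KL minimizer, and attribute the strictness of the inclusion to the non-injectivity of the softmax. Your explicit attainability caveat --- that the containment of $\arg\min$ sets implicitly presupposes the student parameterization can realize $h_s = h_t$ so both objectives bottom out at $0$ --- is a point the paper's one-line proof leaves unstated, and flagging it is a genuine (if minor) improvement in rigor rather than a different approach.
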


\begin{proof} 
    The KL divergence is minimized when $P_{s} = P_{t}$. Based on our definition of LM probability, this is equivalent to achieving $\sigma(O_sh_s) = \sigma(O_th_t)$. In the special case we consider, where the teacher and student are initialized from the same weights, and $O_s$ is held constant, we know that $O_s=O_t$. Thus, a non-unique global minimum will be achieved when $h_s=h_t$, where the non-uniqueness comes from the softmax function $\sigma$, which is not injective.
\end{proof}

More importantly, we find that: (1) The gradient for L2 loss is much smoother than minimizing the KL divergence empirically\footnote{We explore this with an isolated small-scale experiment in Appendix \ref{app:kll2}}. (2) Since the vocabulary size of most modern LLMs is far larger than the hidden dimension, the distance between hidden states can be computed using far fewer operations than the KL divergence. In practice, we optimize the similarity of only the first predicted next token (after all $I$ text tokens/all $Q$ audio tokens) for efficiency, as \citet{morris2023language} has shown that just a single token probability encodes significant information, both for prior and future tokens.

Notably, training with this loss only guarantees that the output distribution is well aligned in response to audio. However, our intuition is that this loss alone is likely to be less robust to input distribution shift without our token alignment loss, which we explore in Section \ref{sec:ablation}.

\section{Experimental Setup}
\subsection{Training Data}
We utilize the English subsection of CommonVoice 17~\citep{ardila2019common} as the dataset for all DiVA training runs. The dataset comprises just over 3.5 thousand hours of read text that has been crowdsourced and validated on the CommonVoice website. We select the CommonVoice for three reasons. Firstly, it is permissively licensed for commercial and research use. Secondly, it contains speech recorded in realistic settings on an individual's device rather than in a professional studio. Finally, it includes speech from 93,725 speakers from a global pool of volunteers\footnote{Statistics drawn from the \href{https://commonvoice.mozilla.org/en/languages}{official CommonVoice tracker}}. The first factor means that the resulting DiVA models we release can be adopted for use broadly, while the latter two help make the training data more representative of real users.

\subsection{Training Hyperparameters}
We train for 4300 steps and a batch size of 512 using the AdamW Optimizer, a learning rate of 5E-5, and a weight decay of 0.1. This amounts to roughly two epochs over the data. We linearly warm up the learning rate for the first 1\% of steps and then follow a cosine learning rate learning rate schedule which decays the learning rate to 0 over the course of the training run. The training run completes in approximately 12 hours on on a TPU v4-256 pod.  \ificlrfinal{\footnote{All training configurations can be found on \href{https://github.com/Helw150/levanter/blob/ab863bb2e930d1fbb405681652cbcd75423325a3/config/via_commonvoice.yaml}{Github}}
\fi

\section{Quantitative and Qualitative Evaluations}
We first assess how DiVA compares to baseline models for various spoken language benchmarks SFT models target. We evaluate on benchmarks for spoken question answering, speech classification, and speech translation. This provides a quantitative validation of DiVA's generalization.

However, these benchmarks were all designed to test single task systems focused on each individual task. It is unclear whether these benchmarks capture the capabilities users expect from virtual assistants that speech LLMs are now powering commercially. To assess this, we run a further side-by-side comparison of DiVA with the best performing model on the benchmark evaluation Qwen 2 Audio \citep{chu2024qwen2}.

\paragraph{Baselines}
We compare our results to three openly available Speech Language Models: SALMONN, Qwen Audio Chat, and Qwen 2 Audio Instruct. Notably, all the baseline models utilize SFT covering these benchmark tasks. This makes them strong baselines: they all use similar scale base LLMs to DiVA, all make use of the Whisper encoder, and all have received direct supervision on the evaluated tasks. For our user study, we compare with Qwen 2 Audio, which reports state-of-the-art numbers and achieves the best average performance in our benchmarks.

\begin{figure*}[t]
    \centering
    \includegraphics[width=1\textwidth]{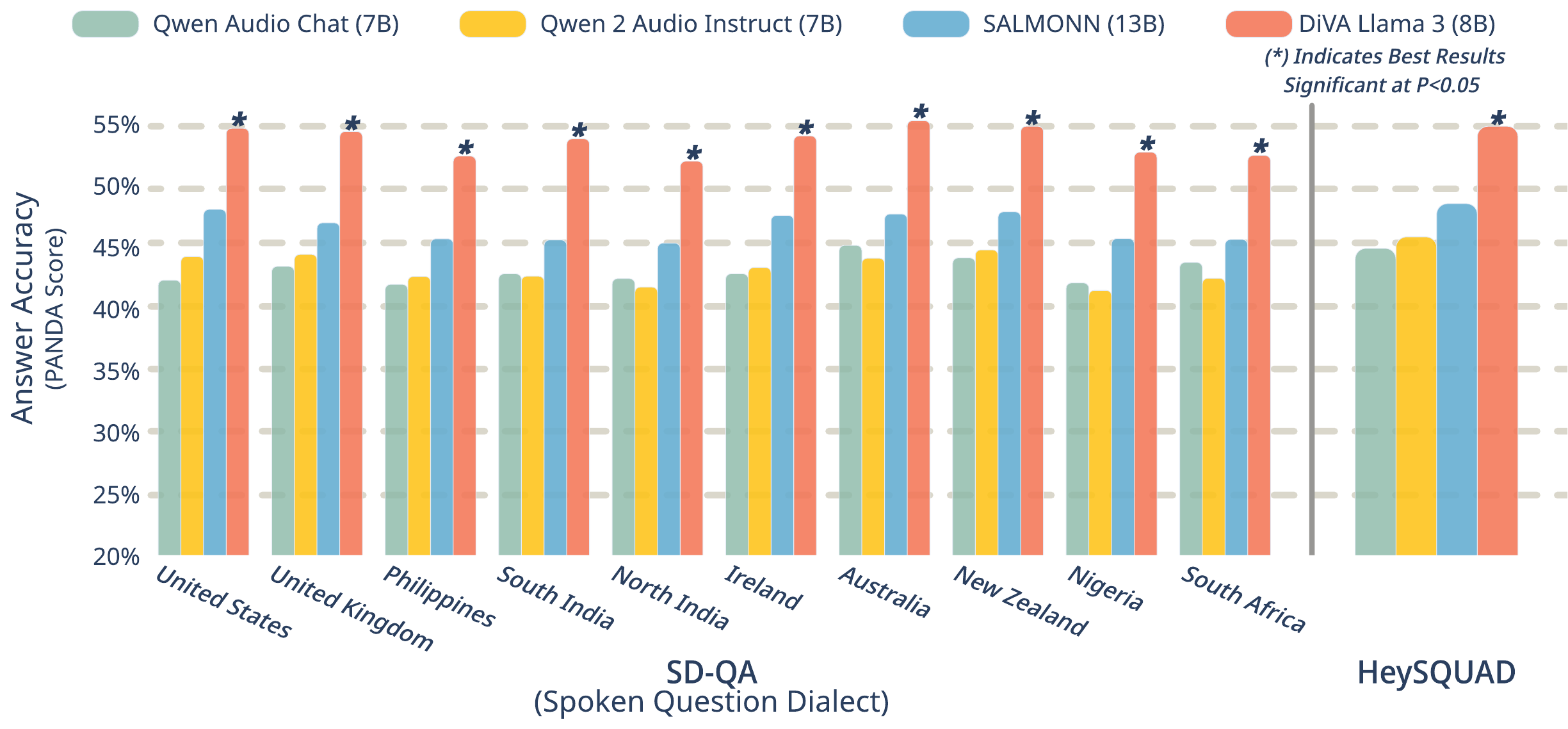} 
    \caption{Results across our two Question Answering benchmarks covering both standard evaluation and robustness to regional accents. Model correctness is assessed using the PANDA metric, which is tuned for strong correlation with human judgments of correctness~\citep{panda}, and significance is from a paired bootstrap test~\citep{hitchhikers}.}
    \label{fig:qa}
\end{figure*}
\subsection{Benchmarking}
For question answering, we use HeySquad~\citep{heysquad} and SDQA~\citep{faisal2021sd}, testing on 4,000 and 494 question-answer pairs, respectively. Classification is broken down into emotion recognition, sarcasm detection, and humor recognition. Emotion recognition is assessed on IEMOCAP~\citep{busso2008iemocap} and MELD~\citep{poria-etal-2019-meld}, with 1,241 and 2,608 utterances. We evaluate sarcasm detection on MUSTARD's 690 clips~\citep{castro-etal-2019-towards} and humor recognition on URFunnyV2's 7,614 examples. Finally, speech translation is tested on CoVoST 2, translating 15,500 English examples into seven commonly-tested typologically diverse languages~\citep{clark2020tydi}. These datasets cover a wide range of traditionally benchmarked speech tasks drawn from prior work, which we cover in greater depth in Appendix \ref{app:datasets}.

\subsubsection{Spoken Question Answering}
We evaluate all models on zero-shot spoken question answering by prompting them with recorded audio of a speaker asking a question and the prompt: \emph{You are a helpful assistant. Give answers as a simple single sentence}. The underlying LLMs for all baseline models are capable of question-answering, meaning that the audio encoder only needs to learn to map audio to the correct corresponding text to achieve strong results. This is a case where we expect DiVA to perform particularly well despite never having been explicitly trained on spoken questions.

Empirically, this expectation is met as shown in Figure \ref{fig:qa}. DiVA significantly (P$<$0.05) over the baselines by at least 10\% (+5 PANDA) across both benchmarks and all accents. 

However, it's unclear whether lower accuracy can be directly attributable to ``forgetting". We qualitatively explore this question by labeling a sample of 50 responses from the HeySQUAD dataset for whether the responses include even an attempted answer relevant to the task.
Qwen Audio shows signs of severe forgetting, with 30\% of responses ignoring the prompt instructions entirely and instead transcribing the question e.g. \emph{"The citation for the Pearson v. Society of Sisters case is What is the citation for the Pearson v. Society of Sisters case?"}. By comparison, SALMONN, which takes inference time interventions to reduce overfitting by partially ablating the LoRA modules learned for the base LLM, sees reduced overfitting with only 8\% of model responses ignoring the prompt and instead transcribing. Qwen 2 Audio sees further reduced overfitting, likely due to its DPO process using unreleased data, with only 4\% instances where the instruction is ignored. DiVA, despite being trained only on transcription data, is the only model adheres to the instruction consistently.

\subsubsection{Speech Classification}
One possible downside of our distillation approach is that the loss function contains minimal supervision for tasks where the audio of speech itself contains rich information through tone. However, tone is frequently correlated with the semantics of the text itself. We hypothesize this may enable the audio encoder to transfer some amount of this tone information based on weak supervision from text. To assess this, we evaluate on speech classification tasks where tone is likely to play a major role: Sarcasm Detection, Humor Detection, and Emotion Recognition.

\begin{figure*}[t]
    \centering
    \includegraphics[width=0.8\textwidth]{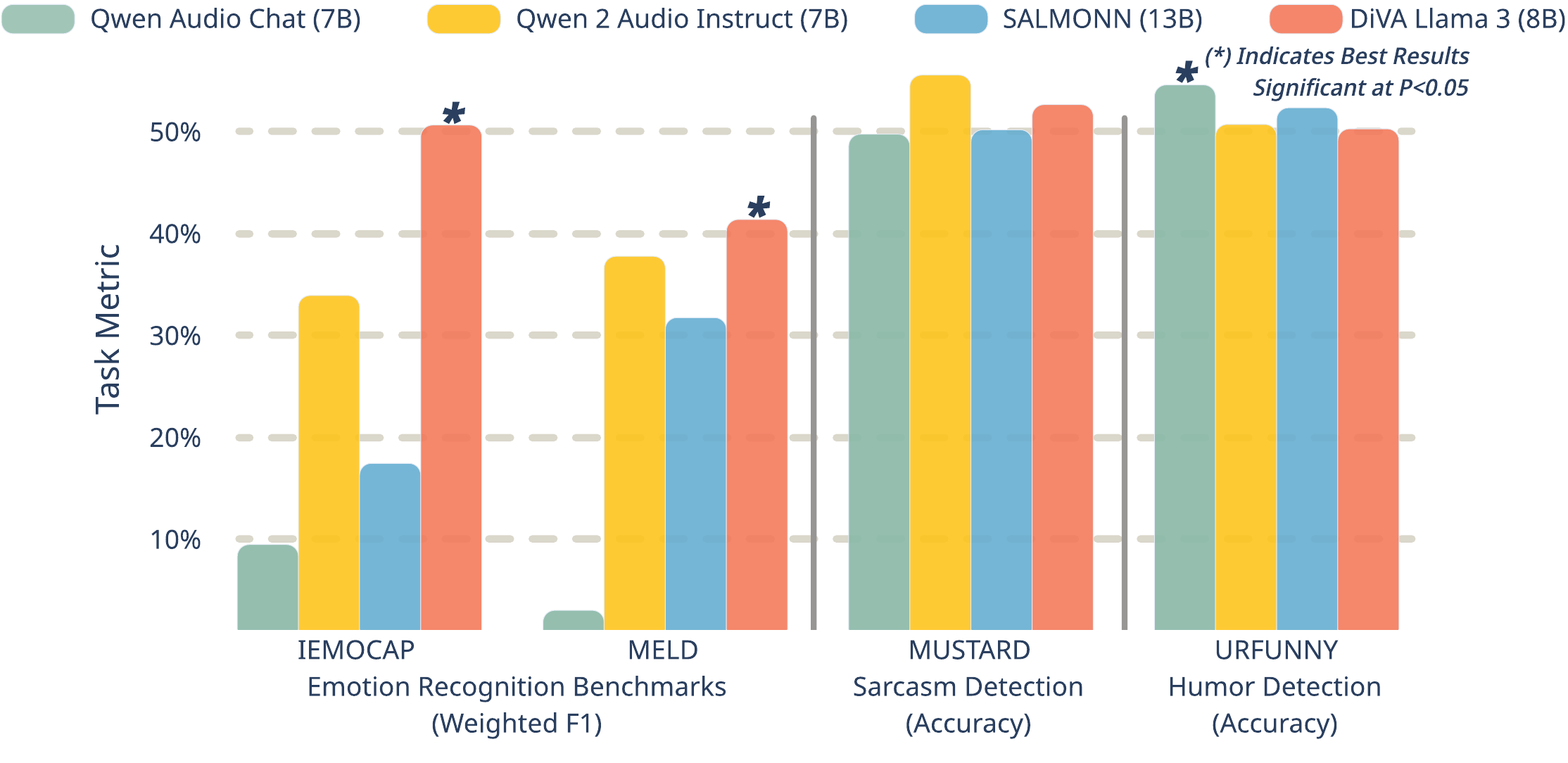} 
    \caption{Results across Emotion, Humor, and Sarcasm classification tasks. We measure class-weighted F1 for multi-class classification and accuracy for binary classification. Significance computed using a paired bootstrap test.}
    \label{fig:tone}
\end{figure*}

\paragraph{Emotion Recognition} 
For emotion classification, we prompt each model with the instructions \emph{Respond in a single word what emotion the input exhibits. If there is no clear emotion, respond 'Neutral'.} To use each model as a classifier, we follow prior work~\citep{mmlu} and use the log-probabilities assigned to each possible label as classifier scores.

DiVA performs significantly better than both baselines on both the MELD benchmark, sourced from television, and IEMOCAPS, which operates over recorded conversations. 
In comparison to DiVA, both baseline models struggle to predict a diverse array of labels. Qwen Audio predicts the emotion as Sadness for greater than 90\% of inputs for both MELD and IEMOCAPS, while SALMONN and Qwen 2 Audio behaves similar with Neutral predictions.

These results are quite surprising given that DiVA is trained without explicit emotion supervision. However, many examples in both IEMOCAPS and MELD communicate emotion through both text and audio signals which may confound how well these evaluations capture true sociophonetic signal. 

\paragraph{Sarcasm \& Humor Detection}
We also evaluate on two tasks where communicative intent is expressed largely through tone. For Sarcasm Detection, we prompt each model to \emph{Respond whether the input is sarcastic. Answer with a simple yes or no} for sarcasm detection and \emph{Respond whether the input is intended to be humorous. Answer with a simple yes or no}. In both cases, we compare the log-probability assigned to either the token "Yes" or the token "No.

No model performs particularly well in these tasks. None of the evaluated models perform significantly ($P>0.05$) better than chance on sarcasm detection and only Qwen Audio Chat performs better than chance on Humor Detection. This suggests there is significant progress to be made in enabling speech-oriented language models to understand more complex social signals in speech.

These tasks also highlight a shortcoming of distillation -- namely, that DiVA inherits even non-desirable behaviors from the base LLM. Even when asked whether obviously very serious text is humorous or sarcastic, Llama 3 will almost always find a way to argue that it is humorous. DiVA inherits this behavior, predicting the "Yes" label in both tasks over 90\% of the time.

\subsubsection{Speech Translation}
\begin{figure*}[t]
    \centering
    \includegraphics[width=0.81\textwidth]{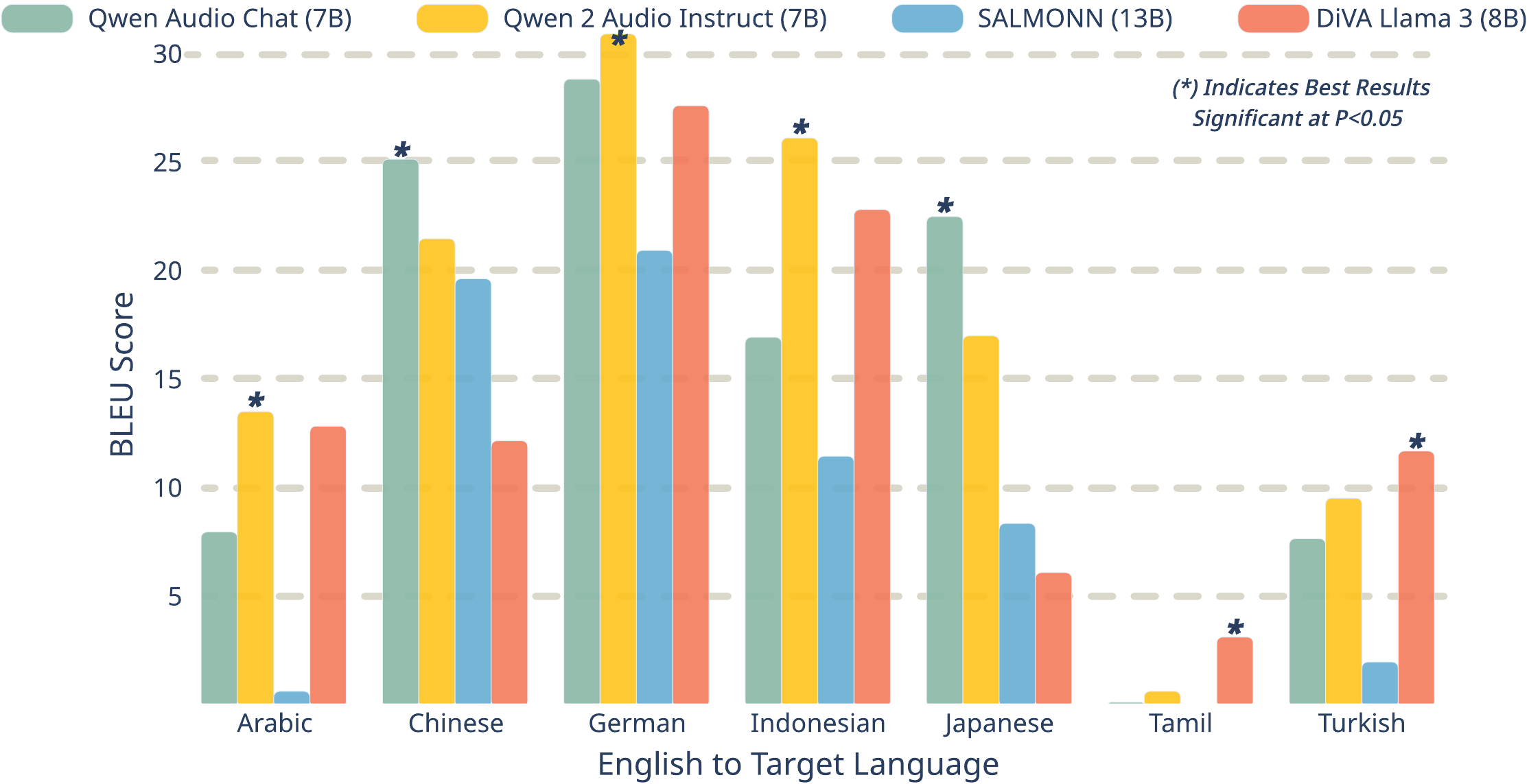} 
    \caption{Results for Speech Translation across 7 typologically diverse languages. We evaluate using SacreBLEU and compute confidence intervals using a Paired Bootstrap.}
    \label{fig:translation}
\end{figure*}

Finally, we assess the speech-to-text translation capabilities of each model from English Speech to text in another language. We prompt each model with the instruction \emph{Translate the input from \{input\_lang\} to \{output\_lang\}}. 

Results on this benchmark are far more mixed. The original Qwen Audio performs best on Chinese and Japanese, Qwen 2 Audio performs best on Arabic, German and Indonesian, and DiVA performs best on Tamil and Turkish. Notably, the original Qwen trains with more than 3700 hours of speech-to-text translation data from CoVost2. While Qwen 2 does not report which tasks it trains on, it is likely it trains on similar or increased volumes of data from CoVost2 as the original Qwen. This highlights the data and compute efficient transfer of the DiVA approach, as both of these models trained on more translation specific data than DiVA used for it's entire training.

DiVA's most notable underperformance is in Chinese and Japanese, where it underperforms both other models. Inspecting DiVA's outputs and comparing them to translations from Llama 3 in response to text, we again find that our distillation loss leads us to preserve a negative behavior — for both Chinese and Japanese, Llama 3 has a strong bias towards generating translations in the Latin alphabet (Pinyin and Romanji) rather than the expected native script. This leads to especially poor results in these languages.

\begin{figure}[t]
\centering
    \includegraphics[trim={0.3in} {3.3in} {0.3in} {0.8in}, clip, width=0.75\textwidth]{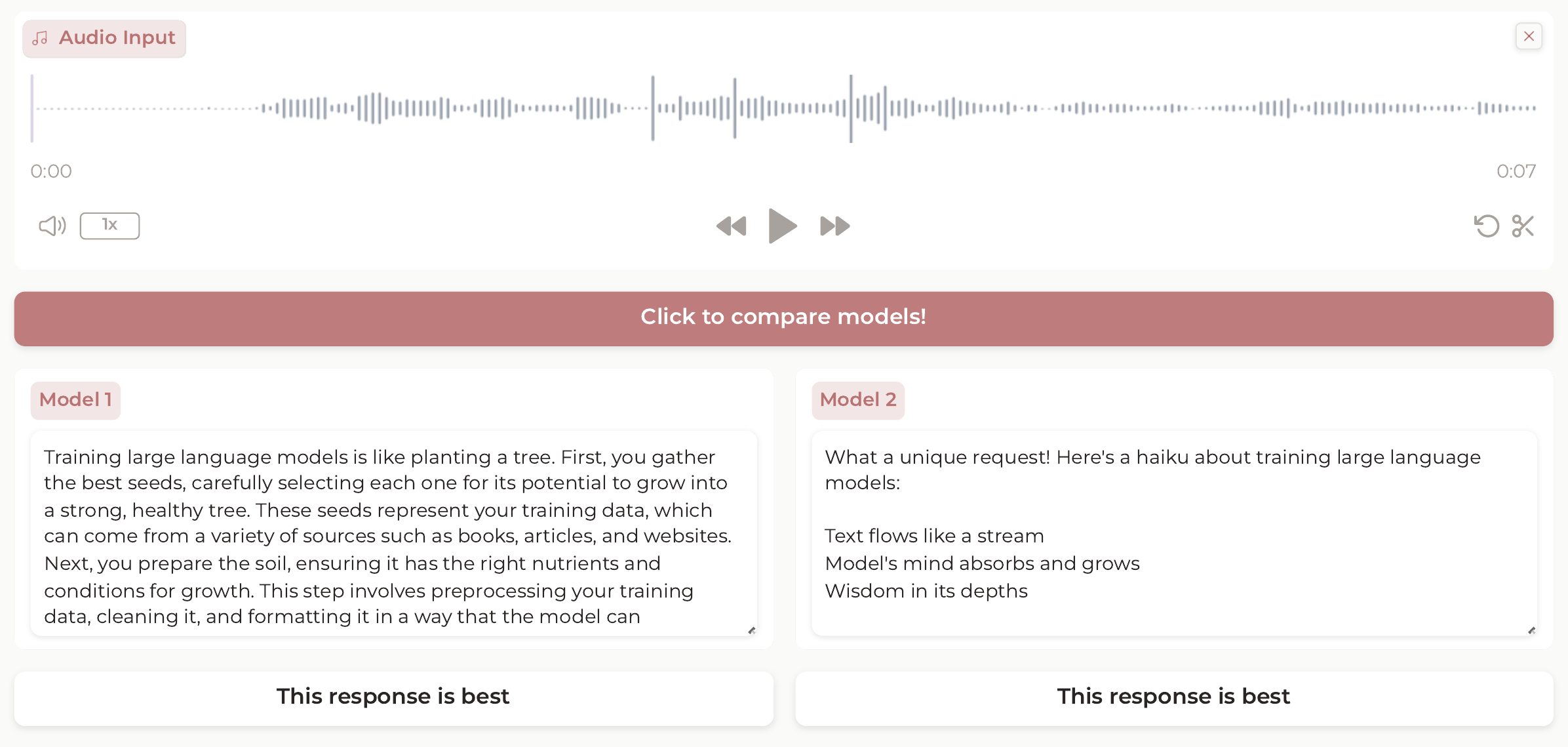} 
    \caption{Example of the double-blind interface for the user study with responses (Left: Qwen 2, Right: DiVA) to the speech \emph{Can you tell me about Large Language Models in the style of a haiku?}.}
    \includegraphics[width=0.6\textwidth]{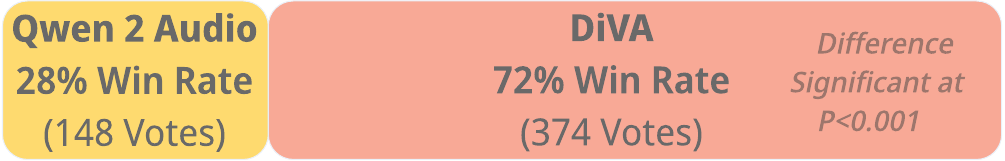} 
    \caption{Win-rate between models in our 522 preferences from 53 Prolific users.}%The KL Divergence Loss term is the core component of our success, but we find that adding token alignment acts as a form of regularization leading to no significant drops in performance on SD-QA, improves performance on HeySQUAD, and requires minimal additional computation.
    \label{fig:user_study}
    \vspace{-2em}
\end{figure}
\subsection{Qualitative User Study}
Finally, to get a sense of how well the resulting models match user preferences, we recruit participants to compare DiVA to the top performing baseline, Qwen 2 Audio.

\subsubsection{Recruitment \& Study Design}
We recruit 53 participants on the Prolific platform to provide preference ratings. Each user was allowed to contribute a maximum of 10 ratings, but able to opt-out at any time, resulting in 522 preference ratings comparing the models. We paid users 2.50\$ per 10 ratings, which took fewer than 10 minutes of active time for all annotators involved, for an effective pay rate of 15\$ per hour. We report annotator demographics in Appendix \ref{app:demographics}.

We pre-screened for users who report familiarity with existing LLM chatbots and virtual assistants (e.g. ChatGPT, Gemini, Claude and others). In order to avoid biasing our participants, we prompt them without reference to specific tasks to \emph{Record something you'd say to an AI Assistant! Think about what you usually use Siri, Google Assistant, or ChatGPT for}. Users were then shown responses from each model, without knowledge of which model was which. To avoid any positional bias, we shuffle the order which users were shown model responses for each recording submitted.

\subsubsection{Results}
Despite no clear winner on all benchmarks for Qwen 2 and DiVA, DiVA generally is strongly preferred by users, with a 72\% win rate at the preference level. At the user level, 41/53 (77\%) of users preferred DiVA for the majority of their inputs. Beyond showing that DiVA improves preference alignment, this indicates that benchmarks may not correlate with practical usage.

\section{Loss Ablation}
\label{sec:ablation}

\begin{wrapfigure}{Ht}{0.6\textwidth}
    \begin{minipage}{0.6\textwidth}
    \vspace{-3em}
    \includegraphics[width=1\textwidth]{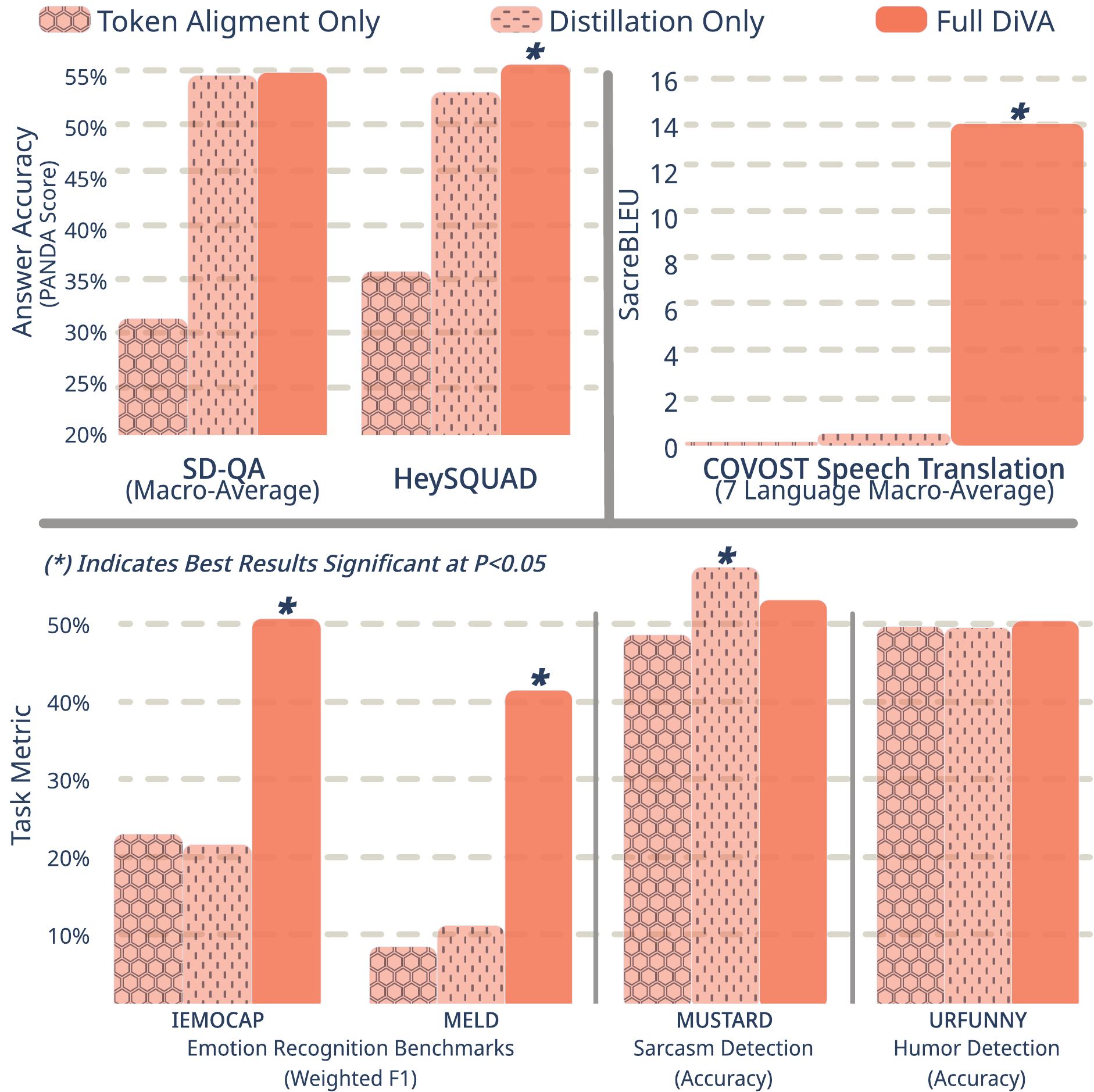} 
    \caption{Ablation of the loss components from Section \ref{sec:method}. Distillation leads to a capable audio-only model, but token-alignment improves instruction adherence.}
    \vspace{-3em}%The KL Divergence Loss term is the core component of our success, but we find that adding token alignment acts as a form of regularization leading to no significant drops in performance on SD-QA, improves performance on HeySQUAD, and requires minimal additional computation.
    \label{fig:ablation}
    \end{minipage}
  \end{wrapfigure}  

To better understand each component of our distillation loss, we investigate the influence of each loss component independently. In Figure \ref{fig:ablation}, we compare results between the complete DiVA method, using just the output distillation loss, and using just the input token alignment loss.

\paragraph{Impacts of KL Divergence Loss}
The most clear necessity for DiVA is the KL Divergence loss on the output distribution. Using token-alignment only does not simply lead to marginally worse results, it causes generations to be often incoherent. For generative tasks, the model often outputs sentences which are only vaguely semantically related to the input or unrelated markdown headers. In classification tasks, the token-alignment only model never performs significantly better than random guessing.

\paragraph{Impacts of Token Alignment Loss}
This might raise the question: why use the token-alignment loss if it performs so poorly? In evaluations on question answering, this is certainly reasonable since using the KL Divergence loss alone already leads to stronger performance than the SFT baselines.

However, for translation and emotion recognition tasks, we see near-zero results from KL Divergence loss alone. Qualitatively, we observe that the distillation only model replies directly to the speech regardless of the text instructions. 

We quantify this failure to adhere to instructions for the translation task using FastText Language ID~\citep{fasttext} on the outputs, under the assumption that outputs which are not in the correct target language are the result of ignored instructions. DiVA outputs the correct language 74\% of the time while the distillation only model outputs the correct language only 1.4\% of the time\footnote{We include LID results for all models in Appendix \ref{app:LID}}.

This indicates that the token-alignment loss is key to achieving a model using distillation that can follow both audio instructions and text instructions such as a system prompt.

\section{Conclusion}
In summary, we release DiVA, an end-to-end Voice Assistant model capable of processing text and audio natively.  Our cross-modal distillation loss from text to speech showcases a promising direction for cost-effective capabilities transfer from one modality to another. Our Distilled Voice Assistant generalizes to Spoken Question Answering, Classification, and Translation despite only being trained on transcription data.  Furthermore, DiVA is preferred by users to our most competitive baseline Qwen 2 Audio in 72\% of instances despite DiVA taking over 100x less training compute. Together, these contributions highlight a path forward for rapid adaptation of LLMs to Speech, without large investments in new training datasets\footnote{We include anonymized links to training and evaluation code in Appendix \ref{app:repro}}.

\section{Acknowledgements}
The authors would like to thank Larry Heck, Karen Livescu, Ryan Li, Chenglei Si, Yijia Shao, and Rose Wang for their comments on this work and on audio model evaluation at various stages in this project. We also are very grateful for the code and system design review from David Hall when adding audio support into Levanter. Computing resources used for this work were funded through a Stanford HAI-GCP Cloud Credit Grant, as well as support from the Google TPU Research Cloud.

\section{Contributions}
Will and Diyi led the project, scoped the goals, and planned the overall experimental procedure. Will implemented and trained DiVA, as well as the inference code to serve interactive evaluations. Yanzhe helped Will design and validate the DiVA architecture and loss. Ella, Weiyan, and Michael helped format, integrate, and evaluate models on existing static benchmarks. All authors helped review, draft, and edit the writing of this work.

\bibliography{ref}
\bibliographystyle{conference}

\appendix
\section{Appendix}
\subsection{Reproducibility Statement}\label{app:repro} % This is an optional ICLR addition and does not count torwards the page limit

\ificlrfinal
We release our \href{https://github.com/Helw150/levanter/blob/will/distill/src/levanter/models/via.py}{training code}, as well as \href{https://github.com/SALT-NLP/DiVA-Eval}{evaluation code, demo code \& raw outputs}. All dataset processing details are included in Appendix \ref{app:datasets}. We release all model weights, as well as inference code, for both ablations and the main model on \href{https://huggingface.co/models?search=WillHeld/DiVA}{HuggingFace}, where they have been downloaded $>$100,000 times externally since our public model release on July 26th, 2024 including for extensive external evaluations in English and Thai which concluded that \emph{DiVA is the only model that performs well on the
Speech [Instruction Following] task, but it experiences a notable drop when tested
on Thai.}~\citep{typhoon}.
\else
We release our \href{https://anonymous.4open.science/r/DiVA-Anon-CD74}{training code}, as well as \href{https://anonymous.4open.science/r/DiVA-Eval-3526/}{evaluation code, demo code \& raw outputs} at Anonymized links for the purpose of review. All dataset processing details are included in Appendix \ref{app:datasets}.

\fi

\subsection{Toy Experiment on KL Divergence versus Hidden State Alignment}\label{app:kll2}
Beyond being a valid and efficient approximate of the KL Divergence, the $L_2$ loss should offer a more stable gradient, especially early in training when the output distributions are extremely different. When $P_t$ is positive and $P_s$ is near zero, the KL divergence explodes to extremely large values which can make optimization difficult and subject to significant numerical error. 

In order to test this intuition, we set up a toy experiment where the student model outputs a single hidden state $h_s$ and the teacher model outputs a single hidden state $h_t$. In this highly simplified space, each model is fully parameterized by the these hidden states. We initialize and output vocabulary from the normal distribution with $32,000$ vocabulary items. Then, we optimize $h_s$ based on either the $L_2$ distance with $h_t$ or the KL Divergence with the output probabilities. Finally, for both procedures, we optimize for 100 steps with stochastic gradient descent, running the experiment 100 times at logarithmically increasing embedding dimensions, and plot the final KL divergence achieved under each loss function.

\begin{figure}[H]
    \centering
    \includegraphics[width=1\linewidth]{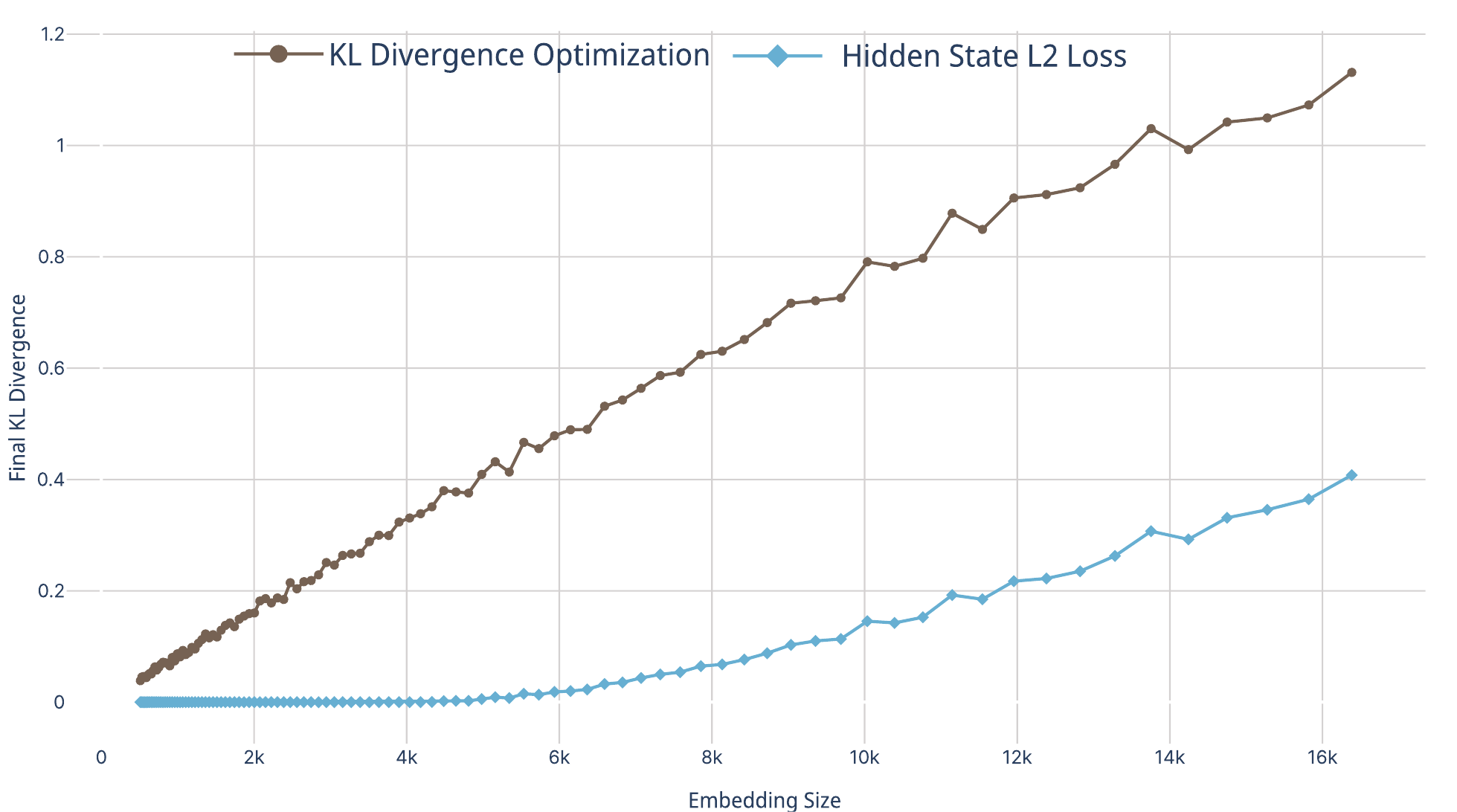}
    \caption{Empirical Comparison of the KL Divergence with our Proxy $L_2$ loss in a toy experimental setup. Optimizing the KL Divergence directly leads to \textit{worse} KL Divergence than optimizing the $L_2$ loss. This gap increases as the hidden dimension becomes larger.}
    \label{fig:enter-label}
\end{figure}

We see that, as the embedding dimension grows, optimizing the $L_2$ loss actually achieves \textit{lower} KL divergence in this setup than optimizing the KL Divergence directly. To some extent, this makes sense as the $L_2$ loss is an incredibly simple convex function to optimize in this setting, while the KL divergence introduces significant additional complexity and a much sharper loss landscape early in optimization. We used this setup early in model design phases to help validate the choice of this approximation empirically, without training full scale models.

\subsection{In-Depth Evaluation Description}\label{app:datasets}
\subsubsection{Spoken Question Answering}
\paragraph{HeySquad}
HeySquad \cite{wu2023heysquad} is a spoken question answering (QA) dataset that aims to measure the QA ability of digital agents. It is based on  the SQuAD dataset \citet{rajpurkar2016squad} with 76K human-spoken and 97K machine-generated questions, and the corresponding answers. We evaluate the models on the open-source validation set with around 4K QA pairs.  

\paragraph{Spoken Dialect Question Answering (SDQA)}
SDQA~\cite{faisal2021sd} assesses the robustness of Spoken Language Understanding to global phonological variation in English. The dataset is made up of the same 1000 questions spoken and recorded by speakers in 10 accent regions where English is frequently spoken. We evaluate on the 494 of these questions which contain ground truth answers.

\subsubsection{Speech Classification}
\paragraph{Emotion Recognition}
\textbf{Interactive Emotional Dyadic Motion Capture (IEMOCAP)}
IEMOCAP~\cite{busso2008iemocap} is a dataset of $\sim$12 hours of videos, audio, motion capture, and transcripts of actors performing both improvised and scripted scenes.  The seven professional and three student actors perform emotionally expressive scenes.  Each conversation turn in each scene was labeled by six evaluators as demonstrating ``happiness," ``sadness," ``anger," ``surprise," ``fear," ``disgust," ``frustration," ``excitement," ``neutral state," or ``other." We follow \citet{SUPERB} and remove unbalanced class labels, resulting in 1241 audio utterances in the fifth fold used by ~\citet{tang2023salmonn}.

\textbf{Multimodal EmotionLines Dataset (MELD)} MELD~\cite{poria-etal-2019-meld} contains 13,708 utterances labeled by emotion and collected from the sitcom \textit{Friends}.  MELD builds on EmotionLines~\cite{hsu-etal-2018-emotionlines}; however, the authors of MELD ask annotators to watch the videos instead of simply reading the transcripts to produce labels.  Three graduate student annotators labeled all utterances for emotions: ``anger," ``disgust," ``fear," ``joy," ``neutral," ``sadness," and ``surprise," as well as for sentiments ``positive," ``negative," ``neutral."  We evaluate on the test set of 2608 utterances.

\paragraph{Communicative Intent Recognition}
\textbf{Multimodal Sarcasm Dataset (MUSTARD)} MUSTARD~\cite{castro-etal-2019-towards} is a collection of 690 clips from the TV shows Friends, The Golden Girls, The Big Bang Theory, and Sarcasmaholics Anonymous, labeled as sarcastic or non-sarcastic by three annotators.  The clips were collected primarily from YouTube using keywords like \textit{Chandler sarcasm, Friends sarcasm, etc.} and sampled from MELD~\cite{poria-etal-2019-meld}. The final dataset was filtered to have an even number of labels of sarcastic and non-sarcastic clips.  We evaluate on all 690 clips to test the models' capability in understanding intended sarcasm.

\textbf{URFunny} URFunny~\citep{hasan-etal-2019-ur} is a multimodal humor recognition benchmark constructed from 90.23 hours of TED talk recordings, spanning 1741 speakers and 417 topics.  TED produces transcripts for the talks, which contain "[laughter]" markers that show when the audience laughs.  The authors sampled the context and punchline before laughter markers for 8257 positive examples and random parts of the transcript without laughter markers for 8257 negative examples.  URFunnyV2 filters out noise and reduces overlap in examples.  We evaluate 7614 examples from the train split of URFunnyV2 to evaluate the models' ability to understand speakers' humorous intents.

\subsubsection{Speech Translation}
\paragraph{CoVoST 2} CoVoST 2~\citep{wang2020covost} is a speech-to-text translation benchmark to and from English. The speech inputs are sourced from the CommonVoice and professional translators are hired to translate the recording into a target language. The test dataset is large, made up of 15,500 examples translated from English to each target language. We evaluate on 7 target languages selected for their typological diversity in prior work~\citep{clark2020tydi}.

\subsection{Language ID Outputs For All Models}\label{app:LID}
\begin{table}[H]
\centering
\caption{Percentage of outputs for which Language ID matches the target language}
\begin{tabular}{lcccccc}
\toprule
Language & \multicolumn{1}{l}{DiVA} & \multicolumn{1}{l}{KL Only} & \multicolumn{1}{l}{Token Alignment} & \multicolumn{1}{l}{Qwen} & \multicolumn{1}{l}{Qwen 2} & \multicolumn{1}{l}{SALMONN} \\
\cmidrule(r){1-1} \cmidrule(lr){2-2} \cmidrule(lr){3-3} \cmidrule(lr){4-4} \cmidrule(lr){5-5} \cmidrule(lr){6-6} \cmidrule(lr){7-7}
Arabic       & 84\%                     & 2\%                          & 0\%                           & 95\%                     & 90\%                      & 19\%                        \\\cmidrule{1-1}
German       & 90\%                     & 1\%                          & 0\%                           & 99\%                     & 98\%                      & 77\%                        \\\cmidrule{1-1}
Indonesian       & 85\%                     & 1\%                          & 0\%                           & 97\%                     & 97\%                      & 77\%                        \\\cmidrule{1-1}
Japanese       & 28\%                     & 2\%                          & 0\%                           & 100\%                    & 99\%                      & 67\%                        \\\cmidrule{1-1}
Tamil       & 96\%                     & 1\%                          & 0\%                           & 60\%                     & 79\%                      & 8\%                         \\\cmidrule{1-1}
Turkish       & 74\%                     & 1\%                          & 0\%                           & 93\%                     & 92\%                      & 28\%                        \\\cmidrule{1-1}
Mandarin   & 60\%                     & 2\%                          & 0\%                           & 91\%                     & 83\%                      & 93\% \\               \bottomrule
\end{tabular}
\end{table}

\subsection{Prolific User Demographics}\label{app:demographics}
\begin{table}[H]
\centering
\caption{Aggregate Metrics for Age, Gender Identity, and High-Level Ethnicity Information from our User Study. Our participants cover a wide range of ages, are gender balanced, and have a similar distribution of ethnicities as reported in the United States Census.}
\begin{tabular}{lrlrlr}
\toprule
\multicolumn{2}{c}{Age} & \multicolumn{2}{c}{Gender Identity} & \multicolumn{2}{c}{Ethnicity (Simplified)} \\
\cmidrule(lr){1-2} \cmidrule(lr){3-4} \cmidrule(lr){5-6}
Median Age    & 34   & Man      & 50.9\%   & White    & 59.2\%    \\
Max Age       & 69   & Woman    & 49.1\%   & Black    & 16.3\%    \\
Minimum Age   & 19   & Other     & 0\%         & Asian    & 12.2\%    \\
              &      &           &          & Mixed    & 4.1\%     \\
              &      &           &          & Other    & 8.2\%    \\
\bottomrule
\end{tabular}
\end{table}

\end{document}